\newtheorem{assumption}{Assumption}
\newtheorem{lemma}{Lemma}
\newtheorem{theorem}{Theorem}
\title{
    \hrule height 2pt
    \vspace{10pt}
    AlphaGrad: Non-Linear Gradient Normalization Optimizer
    \vspace{10pt}
    \hrule height 1pt
}
\author{Soham Sane \\ \small Collins Aerospace (RTX)}
\date{\today}
\newcommand{\alphagrad}{AlphaGrad} 
\begin{document}

\maketitle
\hrule height 2pt
\begin{abstract}
We introduce AlphaGrad, a memory-efficient, conditionally stateless optimizer addressing the memory overhead and hyperparameter complexity of adaptive methods like Adam. AlphaGrad enforces scale invariance via tensor-wise L2 gradient normalization followed by a smooth hyperbolic tangent transformation, $g' = \tanh(\alpha \cdot \tilde{g})$, controlled by a single steepness parameter $\alpha$. Our contributions include: (1) the AlphaGrad algorithm formulation; (2) a formal non-convex convergence analysis guaranteeing stationarity; (3) extensive empirical evaluation on diverse RL benchmarks (DQN, TD3, PPO). Compared to Adam, AlphaGrad demonstrates a highly context-dependent performance profile. While exhibiting instability in off-policy DQN, it provides enhanced training stability with competitive results in TD3 (requiring careful $\alpha$ tuning) and achieves substantially superior performance in on-policy PPO. These results underscore the critical importance of empirical $\alpha$ selection, revealing strong interactions between the optimizer's dynamics and the underlying RL algorithm. AlphaGrad presents a compelling alternative optimizer for memory-constrained scenarios and shows significant promise for on-policy learning regimes where its stability and efficiency advantages can be particularly impactful.
\vspace{15pt} 
\hrule height 1pt 
\end{abstract}

\newpage

\tableofcontents

\newpage

\section{Introduction}
\label{sec:introduction}

Optimization lies at the heart of deep learning, enabling the training of increasingly complex models across diverse domains \cite{He2016ResNet, Devlin2019BERT, Mnih2015DQN}. Adaptive gradient methods, particularly Adam \cite{Kingma2014Adam}, have become the de facto standard due to their robustness to hyperparameter settings and rapid convergence across a wide range of tasks \cite{Ruder2016Overview}. However, the success of these methods comes at the cost of significant per-parameter state requirements (first and second moment estimates), leading to substantial memory overhead, particularly detrimental when scaling to massive models or deploying in resource-constrained environments \cite{Kingma2014Adam}. Furthermore, the complex interplay of multiple hyperparameters ($\beta_1, \beta_2, \epsilon$, learning rate) complicates tuning, and empirical evidence sometimes suggests potential generalization gaps compared to meticulously tuned non-adaptive methods like SGD with momentum \cite{Wilson2017Marginal}.

Existing alternatives seek to mitigate these issues but introduce their own trade-offs. SGD with momentum \cite{Sutskever2013Momentum} remains memory-efficient but requires careful learning rate scheduling and struggles with ill-conditioning and gradient scale variance across layers \cite{Ruder2016Overview}. Layer-wise adaptive techniques like LARS \cite{You2017LARS} and LAMB \cite{You2019LAMB} address scale variance without per-parameter state but employ specific heuristics (trust ratios) and can still involve complex tuning. At the other extreme, sign-based methods like signSGD \cite{Bernstein2018signSGD} achieve maximal memory efficiency by discarding magnitude information entirely, but this aggressive quantization can hinder convergence precision and stability \cite{Bernstein2018signSGD}. This landscape reveals a need for optimizers that retain the scale-invariance benefits of adaptive methods while minimizing statefulness and hyperparameter complexity.

In this work, we introduce \alphagrad{}, a novel optimization algorithm designed to bridge this gap. \alphagrad{} operates via a simple yet effective two-stage transformation applied layer-wise to the gradient $g_t^L$: (i) L2 normalization ($\tilde{g}_t^L = g_t^L / (\|g_t^L\|_2 + \epsilon)$) to decouple gradient direction from magnitude and achieve inherent scale invariance across layers, followed by (ii) element-wise application of the hyperbolic tangent function to the normalized gradient scaled by a steepness factor $\alpha^L$ ($g'^{L}_t = \tanh(\alpha^L \cdot \tilde{g}_t^L)$), providing smooth, bounded updates. This design yields an optimizer that is conditionally stateless (requiring no per-parameter history beyond optional standard momentum), thereby significantly reducing memory overhead compared to Adam \cite{Kingma2014Adam}. Its behavior is governed primarily by the single hyperparameter $\alpha$, which smoothly interpolates the update rule from scaled normalized gradient descent ($\alpha \to 0$) towards sign-based updates ($\alpha \to \infty$).

Our primary contributions are threefold:
\begin{enumerate}[noitemsep, topsep=0pt]
    \item We propose \alphagrad{}, a memory-efficient optimizer combining layer-wise normalization and smooth non-linear gradient clipping.
    \item We derive a theoretical guideline for selecting the key hyperparameter $\alpha$ based on layer dimensionality ($d_L$), suggesting $\alpha \propto \sqrt{d_L}$, and provide a rigorous convergence analysis establishing stationarity guarantees in non-convex settings.
    \item We conduct extensive empirical evaluations across diverse reinforcement learning benchmarks (CartPole/DQN \cite{Mnih2015DQN}, Hopper/TD3 \cite{Fujimoto2018TD3}, HalfCheetah/PPO \cite{Schulman2017PPO}), comparing \alphagrad{} against Adam.
\end{enumerate}
Our results demonstrate that \alphagrad{} exhibits a highly context-dependent performance profile. While facing stability challenges in off-policy DQN, it shows significantly improved training stability over Adam in the complex TD3 setting (when $\alpha$ is tuned empirically) and achieves substantially superior performance in on-policy PPO. These findings highlight the critical role of empirical $\alpha$ tuning, often deviating from the theoretical guideline, and suggest \alphagrad{}'s potential as a highly effective and stable optimizer, particularly in on-policy learning regimes or scenarios where memory efficiency and predictable dynamics are paramount.

The remainder of this paper details the \alphagrad{} formulation (Section \ref{sec:formulation}), discusses the theoretical guideline for $\alpha$ (Section \ref{sec:alpha_choice}), presents the convergence analysis (Section \ref{sec:convergence}), contrasts with related work (Section \ref{sec:related_work}), describes the experimental setup (Section \ref{sec:experiments}), analyzes the empirical results (Section \ref{sec:results}), and outlines promising directions for future work (Section \ref{sec:validation_framework}).

\section{The AlphaGrad Optimizer: Mathematical Formulation}
\label{sec:formulation}

AlphaGrad is a stateless gradient-based optimization algorithm that applies layer-wise normalization followed by smooth non-linear clipping to produce stable, bounded parameter updates. Its formulation decouples the magnitude of raw gradients from their direction, enabling consistent behavior across layers with varying scale dynamics. The algorithm proceeds as follows for a given layer $L$ at time step $t$:

\begin{enumerate}
    \item \textbf{Gradient Computation:}
    Compute the gradient of the loss $J$ with respect to the parameters $\theta_t^L$:
    \begin{equation}
        g_t^L = \nabla_{\theta^L} J(\theta_t)
    \end{equation}

    \item \textbf{Layer-wise Gradient Normalization:}
    Gradients are normalized at the tensor level (weights, biases, etc.) to ensure uniform scale across layers:
    \begin{equation}
        \tilde{g}_t^p = \frac{g_t^p}{\lVert g_t^p \rVert_2 + \epsilon}
    \end{equation}
    where $\epsilon$ is a small constant (e.g., $10^{-8}$) for numerical stability. This per-tensor normalization handles diverse layer types—such as convolutional vs. fully connected layers—and mitigates layer-specific scale imbalances.

    \item \textbf{Smooth Clipping via $\tanh$:}
    The normalized gradient is scaled by a steepness factor $\alpha$ and passed through the element-wise hyperbolic tangent function:
    \begin{equation}
        g_t^{\prime L} = \tanh(\alpha^L \cdot \tilde{g}_t^L)
    \end{equation}
    This ensures all gradient components remain in $(-1, 1)$:
    \[
    \|g_t^{\prime L}\|_\infty < 1
    \]
    The parameter $\alpha^L$ interpolates between normalized gradient descent and sign-based updates:
    \begin{itemize}
        \item $\alpha^L \to 0$: behaves like normalized gradient descent.
        \item $\alpha^L \to \infty$: approximates sign-based updates.
    \end{itemize}

    \item \textbf{Parameter Update:}
    \textit{(a) Without Momentum:}
    \begin{equation}
        \theta_{t+1}^L = \theta_t^L - \eta \cdot g_t^{\prime L}
    \end{equation}
    \textit{(b) With Momentum:}
    To accelerate convergence in smooth loss landscapes, AlphaGrad optionally supports a momentum variant:
    \begin{align}
        v_{t+1}^L &= \gamma \cdot v_t^L + \eta \cdot g_t^{\prime L} \\
        \theta_{t+1}^L &= \theta_t^L - v_{t+1}^L
    \end{align}
    Here, $\gamma \in [0.8, 0.99]$ is the momentum decay factor, consistent with typical SGD practices \cite{Sutskever2013Momentum}.
\end{enumerate}

\subsection{\texorpdfstring{Choosing the Steepness Parameter $\alpha$}{Choosing the Steepness Parameter alpha}}
\label{sec:alpha_choice} 

The core transformation in AlphaGrad involves scaling the normalized gradient $\tilde{g}_t^L$ by a steepness parameter $\alpha^L$ before applying the element-wise $\tanh$ function: $g_t^{\prime L} = \tanh(\alpha^L \cdot \tilde{g}_t^L)$. The choice of $\alpha^L$ is crucial as it dictates the behavior of the optimizer, interpolating between scaled normalized gradient descent ($\alpha^L \to 0$) and sign-based updates ($\alpha^L \to \infty$).

A primary motivation is to utilize the non-linear, sensitive region of the $\tanh$ function (typically considered to be around $[-3, 3]$) without excessive saturation, which would discard magnitude information, or staying purely in the linear region, which would negate the effect of the non-linear clipping. We can derive a theoretical guideline for $\alpha^L$ based on controlling the input statistics to the $\tanh$ function.

After layer-wise L2 normalization, the gradient tensor $\tilde{g}_t^L$ has unit norm: $\|\tilde{g}_t^L\|_2 = 1$. In high dimensions ($d_L$ being the number of elements in the tensor), components of vectors uniformly distributed on the unit sphere tend to approximate a Gaussian distribution \cite{Ball1997Sphere}. Specifically, the marginal distribution of a single component $\tilde{g}_{t,i}^L$ can be approximated as $\mathcal{N}(0, 1/d_L)$, implying a standard deviation $\sigma_L \approx 1/\sqrt{d_L}$ \cite{Vershynin2018HDP}.

To control saturation, we can aim to keep the standard deviation of the pre-$\tanh$ input, $\alpha^L \cdot \sigma_L$, within a certain range related to the sensitive region of $\tanh$. For instance, if we aim for the standard deviation to be around a value $\tau_{\sigma}$ (e.g., $\tau_{\sigma} \in [1, 3]$), we get:
\[
\alpha^L \cdot \sigma_L \approx \tau_{\sigma} \quad \Rightarrow \quad \alpha^L \cdot \frac{1}{\sqrt{d_L}} \approx \tau_{\sigma}
\]
This leads to the scaling guideline:
\begin{equation}
\boxed{
\alpha^L \approx k \cdot \sqrt{d_L}, \quad \text{where } k \approx \tau_{\sigma}
}
\label{eq:alpha_scaling_law}
\end{equation}
The parameter $k$ (heuristically often considered in the range $[1, 3]$) reflects how aggressively we map the typical component magnitude onto the $\tanh$ curve. A smaller $k$ keeps most inputs in the near-linear region, while a larger $k$ pushes more components towards saturation. (Alternatively, $k$ can be derived more formally by setting a target saturation *fraction* $f$ and threshold $\tau$, yielding $k \approx \tau / Z_{f/2}$, where $Z_{f/2}$ is the standard normal quantile). 

\paragraph{Practical Limitations and Empirical Tuning:}
It is crucial to recognize that Equation \ref{eq:alpha_scaling_law}, suggesting $\alpha \propto \sqrt{d_L}$, represents just one theoretical guideline derived from simplifying assumptions like high-dimensional isotropic gradients and a focus on saturation control. Alternative theoretical approaches could consider other dimensionality measures more relevant to signal propagation, such as layer fan-in \cite{Glorot2010Understanding, He2015Delving}, or aim to directly control the expected distortion introduced by the $\tanh$ function. However, all such derivations inevitably rely on idealized models of gradient statistics. In practice, the optimal choice of $\alpha^L$ is governed by a more complex interplay of factors. Real-world gradient distributions often exhibit structures like sparsity or anisotropy, deviating significantly from these models \cite{Zhang2022GradientStructure}. Furthermore, peak performance may not align with any specific theoretically targeted saturation level or distortion ratio; instead, it emerges from intricate optimization dynamics involving stability, learning rate interactions, loss landscape geometry, and the unique demands of the learning algorithm (e.g., noise sensitivity in RL, actor-critic coordination) \cite{Agarwal2021DeepRL, Dalal2023ActorCriticComplexity, Biedenkapp2021MBRLHPO}. Layer heterogeneity further complicates the picture, suggesting that a single global $\alpha$, or even one scaled purely by $d_L$ or fan-in, might be suboptimal. Consequently, while theoretical scaling laws might offer heuristics for setting relative $\alpha$ values across layers once a baseline is identified, our results consistently underscore the necessity of treating $\alpha$ as a critical hyperparameter. Optimal performance frequently requires extensive empirical validation and tuning tailored to the specific task, model, and algorithm, often yielding $\alpha$ values far from simple theoretical predictions. Ultimately, selecting $\alpha$ involves empirically balancing the benefits of normalization and bounding against the potential alteration of crucial gradient information by the $\tanh$ transformation.

\section{Convergence Analysis}
\label{sec:convergence}

We provide a theoretical convergence analysis for AlphaGrad. Recognizing that the target domain of deep learning involves complex non-convex landscapes \cite{Goodfellow2016DeepLearning}, we first analyze convergence in the simpler setting of smooth convex optimization to understand fundamental properties. We then extend the analysis to the non-convex setting to show convergence to stationary points.

We analyze the update rule for the full parameter vector $x \in \mathbb{R}^n$:
\begin{align}
    x_{t+1} &= x_t - \eta g'_t \label{eq:update_rule_rev3}\\
    \text{where } \quad g'_t &= \tanh(\alpha \cdot \tilde{g}_t) \label{eq:g_prime_def_rev3}\\
    \text{and } \quad \tilde{g}_t &= \frac{\nabla f(x_t)}{\|\nabla f(x_t)\|_2 + \epsilon} \label{eq:g_tilde_def_rev3}
\end{align}
Here, $f: \mathbb{R}^n \to \mathbb{R}$ is the objective function, $n$ is the total number of parameters, $\eta > 0$ is the learning rate, $\alpha > 0$ is the steepness parameter, and $\epsilon > 0$ is a small constant for numerical stability. We denote the gradient $g_t = \nabla f(x_t)$ and its L2 norm $N_t = \|g_t\|_2$.

\subsection{Convergence in the Convex Setting}

\begin{assumption}[Smooth Convex Objective]
\label{asm:smooth_convex_rev3}
The objective function $f: \mathbb{R}^n \to \mathbb{R}$ is convex and $L$-smooth (i.e., $\nabla f$ is $L$-Lipschitz) \cite{Nesterov2004ConvexOpt}.
\end{assumption}

\begin{assumption}[Bounded Domain and Initial Suboptimality]
\label{asm:bounded_domain_init_rev3}
There exists a minimizer $x^*$ such that $f(x^*) \le f(x)$ for all $x$. The iterates $x_t$ and the minimizer $x^*$ lie within a domain such that $\|x_t - x^*\|_2 \le D$ for all $t \ge 1$. The initial suboptimality is bounded: $\Delta_0 = f(x_1) - f(x^*) < \infty$.
\end{assumption}

\subsubsection{Key Lemmas for Convex Analysis}

\begin{lemma}[Update Norm Bound]
\label{lem:norm_bound_rev4}
The squared L2 norm of the AlphaGrad update direction $g'_t$ is bounded by the dimension $n$: $\|g'_t\|_2^2 \le n$. Consequently, $\|g'_t\|_2 \le \sqrt{n}$.
\end{lemma}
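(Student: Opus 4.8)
The plan is to exploit the element-wise boundedness of the hyperbolic tangent; no machinery beyond the definition of the update direction is required. The central observation is that $\tanh(z) \in (-1, 1)$ for every $z \in \mathbb{R}$, a fact that holds regardless of the argument's sign or magnitude. Since $g'_t = \tanh(\alpha \cdot \tilde{g}_t)$ is obtained by applying $\tanh$ coordinate-wise, each component satisfies $|(g'_t)_i| = |\tanh(\alpha \cdot (\tilde{g}_t)_i)| < 1$, irrespective of the value of the normalized gradient component $(\tilde{g}_t)_i$ or the steepness $\alpha$. This is exactly the $\|g'_t\|_\infty < 1$ property already recorded in the formulation section.

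First I would register this per-coordinate bound, giving $(g'_t)_i^2 < 1$ for each $i \in \{1, \dots, n\}$. Next I would sum over all $n$ coordinates to obtain
\[
\|g'_t\|_2^2 = \sum_{i=1}^{n} (g'_t)_i^2 < \sum_{i=1}^{n} 1 = n,
\]
which immediately yields the stated bound $\|g'_t\|_2^2 \le n$. Taking the monotone square root of both sides then gives the consequence $\|g'_t\|_2 \le \sqrt{n}$.

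I expect no substantive obstacle here: the result follows directly from the range of $\tanh$ together with the finite dimensionality $n$, and it invokes none of the smoothness, convexity, normalization, or bounded-domain assumptions — it holds for an arbitrary vector $\tilde{g}_t$ and any $\alpha > 0$. The only point worth flagging is that the inequality is in fact strict ($\|g'_t\|_2^2 < n$), so presenting it as $\le n$ is a harmless relaxation that keeps the statement valid even in degenerate edge cases; the non-strict form is all that the subsequent convex and non-convex analyses will require when controlling the magnitude of the per-step displacement.
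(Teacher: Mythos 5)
Your proof is correct and is essentially the paper's own argument, which simply notes that $|\tanh(z)| \le 1$ for all $z \in \mathbb{R}$ and calls the bound trivial; you have just written out the coordinate-wise summation explicitly. Your added observation that the inequality is strict is accurate but immaterial to the subsequent analysis.
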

\begin{proof} Trivial, as $|\tanh(z)| \le 1$ for all $z \in \mathbb{R}$. \end{proof}
\textit{Remark:} This bound treats all components as potentially contributing maximally via $\tanh^2(\cdot) \le 1$, ignoring potential sparsity or structure where many components of $\tilde{g}_t$ might be small. This leads to the $O(\sqrt{n})$ factor in the convex rate, which may be pessimistic in practice.

\begin{lemma}[Inner Product Alignment with $\epsilon$ > 0]
\label{lem:inner_product_rev4}
Let $g_t = \nabla f(x_t)$ and $N_t = \|g_t\|_2$. The inner product between $g_t$ and $g'_t = \tanh(\alpha \tilde{g}_t)$ is lower bounded by:
\[
\langle g_t, g'_t \rangle \ge \gamma_t N_t, \quad \text{where} \quad \gamma_t = \tanh(\alpha) \frac{N_t}{N_t + \epsilon}.
\]
Note that $0 \le \gamma_t < \tanh(\alpha)$.
\end{lemma}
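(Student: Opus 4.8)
The plan is to reduce the vector inner product to a sum of scalar terms and then prove a sharp one-dimensional inequality for each. First I would substitute the definition of the normalized gradient. Since $\tilde{g}_t = g_t/(N_t+\epsilon)$, each raw component satisfies $g_{t,i} = (N_t+\epsilon)\,\tilde{g}_{t,i}$, so that
\[
\langle g_t, g'_t \rangle = \sum_i g_{t,i}\tanh(\alpha \tilde{g}_{t,i}) = (N_t+\epsilon)\sum_i \tilde{g}_{t,i}\tanh(\alpha \tilde{g}_{t,i}).
\]
Because $\tanh$ is odd and $\alpha>0$, each summand $\tilde{g}_{t,i}\tanh(\alpha \tilde{g}_{t,i})$ is nonnegative (the two factors share a sign), which already yields $\langle g_t, g'_t\rangle \ge 0$ and hence $\gamma_t \ge 0$.

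The crux is a tight per-component lower bound: I would prove that for every scalar $z$ with $|z|\le 1$,
\[
z\tanh(\alpha z) \ge \tanh(\alpha)\, z^2.
\]
For $z\in[0,1]$ this follows from concavity of the map $z\mapsto \tanh(\alpha z)$ on $[0,\infty)$ (its second derivative $\alpha^2\tanh''(\alpha z)\le 0$): the chord joining $(0,0)$ and $(1,\tanh\alpha)$ lies below the graph, giving $\tanh(\alpha z)\ge z\tanh(\alpha)$, and multiplying by $z\ge 0$ produces the claim. For $z<0$, oddness of $\tanh$ leaves both sides invariant under $z\mapsto -z$, so the $[0,1]$ case transfers directly.

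To invoke this bound I must ensure each normalized component actually lies in $[-1,1]$. This holds because $\|\tilde{g}_t\|_\infty \le \|\tilde{g}_t\|_2 = N_t/(N_t+\epsilon) < 1$, so $|\tilde{g}_{t,i}|\le 1$ for all $i$. Summing the scalar inequality over $i$ then gives
\[
\sum_i \tilde{g}_{t,i}\tanh(\alpha \tilde{g}_{t,i}) \ge \tanh(\alpha)\sum_i \tilde{g}_{t,i}^2 = \tanh(\alpha)\,\|\tilde{g}_t\|_2^2 = \tanh(\alpha)\frac{N_t^2}{(N_t+\epsilon)^2}.
\]
Substituting this into the factored expression above and cancelling one factor of $(N_t+\epsilon)$ recovers $\langle g_t, g'_t\rangle \ge \tanh(\alpha)\frac{N_t^2}{N_t+\epsilon} = \gamma_t N_t$. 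The stated range $0\le\gamma_t<\tanh(\alpha)$ is then immediate from $\gamma_t = \tanh(\alpha)\,N_t/(N_t+\epsilon)$, since $N_t/(N_t+\epsilon)\in[0,1)$ whenever $\epsilon>0$.

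I expect the main obstacle to be establishing the scalar inequality $z\tanh(\alpha z)\ge \tanh(\alpha)z^2$ on exactly the correct interval: the concavity/chord argument is valid only for $|z|\le 1$ and would fail for larger $z$, so the preliminary step verifying $\|\tilde{g}_t\|_\infty \le 1$ through the $\ell_2 \ge \ell_\infty$ norm relation is essential and must precede the summation. Everything else is routine algebra.
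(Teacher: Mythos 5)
Your proof is correct and follows essentially the same route as the paper's: the same factorization $\langle g_t, g'_t\rangle = (N_t+\epsilon)\sum_i \tilde{g}_{t,i}\tanh(\alpha\tilde{g}_{t,i})$, the same key scalar bound $\tanh(\alpha z)\ge z\tanh(\alpha)$ on $[0,1]$ established via concavity of $z\mapsto\tanh(\alpha z)$ (the paper phrases it through the auxiliary function $\phi(x)=\tanh(\alpha x)-x\tanh(\alpha)$ with $\phi''\le 0$ and $\phi(0)=\phi(1)=0$, which is your chord-below-graph argument in different clothing), the same oddness reduction for negative components, and the same final algebra through $\|\tilde{g}_t\|_2^2 = N_t^2/(N_t+\epsilon)^2$. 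Your explicit verification that $\|\tilde{g}_t\|_\infty \le \|\tilde{g}_t\|_2 < 1$ is a welcome touch of care on a step the paper states only parenthetically.
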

\begin{proof} (Corrected reasoning based on feedback)
Let $\tilde{g}_t = g_t / (N_t + \epsilon)$. Define $\phi(x) = \tanh(\alpha x) - x \tanh(\alpha)$ for $x \in [0, 1]$. Its second derivative $\phi''(x) = -2 \alpha^2 \tanh(\alpha x) \text{sech}^2(\alpha x) \le 0$ for $x \ge 0$, showing $\phi$ is concave. Since $\phi(0)=0$ and $\phi(1)=0$, concavity implies $\phi(x) \ge 0$ for $x \in [0, 1]$. Thus, $\tanh(\alpha x) \ge x \tanh(\alpha)$ for $x \in [0, 1]$.
\begin{align*}
\langle g_t, g'_t \rangle &= \langle (N_t + \epsilon) \tilde{g}_t, \tanh(\alpha \tilde{g}_t) \rangle = (N_t + \epsilon) \sum_{i=1}^n \tilde{g}_{t,i} \tanh(\alpha \tilde{g}_{t,i}).
\end{align*}
Using the odd property of $\tanh$, $\tilde{g}_{t,i} \tanh(\alpha \tilde{g}_{t,i}) = |\tilde{g}_{t,i}| \tanh(\alpha |\tilde{g}_{t,i}|)$. Since $|\tilde{g}_{t,i}| \in [0, 1]$ (as $\|\tilde{g}_t\|_2 \le 1$), we apply the inequality derived from $\phi(x)$: $\tanh(\alpha |\tilde{g}_{t,i}|) \ge |\tilde{g}_{t,i}| \tanh(\alpha)$.
\begin{align*}
\langle g_t, g'_t \rangle &\ge (N_t + \epsilon) \sum_{i=1}^n |\tilde{g}_{t,i}| (|\tilde{g}_{t,i}| \tanh(\alpha)) \\
&= (N_t + \epsilon) \tanh(\alpha) \sum_{i=1}^n \tilde{g}_{t,i}^2 = (N_t + \epsilon) \tanh(\alpha) \|\tilde{g}_t\|_2^2 \\
&= (N_t + \epsilon) \tanh(\alpha) \left( \frac{N_t}{N_t + \epsilon} \right)^2 = \tanh(\alpha) \frac{N_t^2}{N_t + \epsilon} \\
&= \left( \tanh(\alpha) \frac{N_t}{N_t + \epsilon} \right) N_t = \gamma_t N_t.
\end{align*}
\end{proof}

\subsubsection{Convergence Theorem (Convex Case)}

\begin{theorem}[Average Iterate Convergence - Convex]
\label{thm:avg_conv_rev4}
Under Assumptions \ref{asm:smooth_convex_rev3} and \ref{asm:bounded_domain_init_rev3}, running AlphaGrad for $T$ steps with a constant step size $\eta > 0$ yields:
\[
\frac{1}{T}\sum_{t=1}^T (f(x_t)-f(x^*)) \;\le\; \frac{D \Delta_0}{\eta T \gamma_{min}} + \frac{D L \eta n}{2 \gamma_{min}},
\]
where $\Delta_0 = f(x_1)-f(x^*)$ and $\gamma_{min} = \inf_{t \ge 1} \left\{ \tanh(\alpha) \frac{\|g_t\|_2}{\|g_t\|_2 + \epsilon} \right\}$.
Choosing $\eta = \sqrt{\frac{2 D \Delta_0}{T L n}}$ (requires $\Delta_0 > 0$) yields the rate:
\[
\frac{1}{T}\sum_{t=1}^T (f(x_t)-f(x^*)) \le \frac{\sqrt{2 D L n \Delta_0}}{\gamma_{min}\sqrt{T}} = O\left(\frac{\sqrt{D L n \Delta_0}}{\gamma_{min}\sqrt{T}}\right).
\]
\end{theorem}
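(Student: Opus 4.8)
The plan is to run the standard smooth-convex descent argument, but with the two structural lemmas playing the roles ordinarily filled by the raw gradient. First I would apply $L$-smoothness to consecutive iterates,
\[
f(x_{t+1}) \le f(x_t) + \langle \nabla f(x_t), x_{t+1}-x_t\rangle + \tfrac{L}{2}\|x_{t+1}-x_t\|_2^2,
\]
and substitute the update $x_{t+1}-x_t = -\eta g'_t$ to obtain $f(x_{t+1}) \le f(x_t) - \eta \langle g_t, g'_t\rangle + \tfrac{L\eta^2}{2}\|g'_t\|_2^2$. Then I plug in Lemma~\ref{lem:inner_product_rev4} to lower bound $\langle g_t, g'_t\rangle \ge \gamma_t N_t$ and Lemma~\ref{lem:norm_bound_rev4} to upper bound $\|g'_t\|_2^2 \le n$, producing the per-step inequality $\eta\gamma_t N_t \le f(x_t) - f(x_{t+1}) + \tfrac{L\eta^2 n}{2}$.

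The decisive step, and where the argument departs from vanilla gradient descent, is converting the gradient-norm factor $N_t$ into the suboptimality gap $f(x_t)-f(x^*)$. Since the update travels along the clipped/normalized direction $g'_t$ rather than $g_t$, the guaranteed descent is proportional to $N_t$, not directly to the gap. Here I would invoke convexity together with Cauchy--Schwarz and the bounded-domain assumption:
\[
f(x_t)-f(x^*) \le \langle \nabla f(x_t), x_t - x^*\rangle \le N_t \,\|x_t - x^*\|_2 \le N_t D,
\]
hence $N_t \ge (f(x_t)-f(x^*))/D$. Combining this with $\gamma_t \ge \gamma_{min}$ yields $\tfrac{\eta\gamma_{min}}{D}\,(f(x_t)-f(x^*)) \le f(x_t)-f(x_{t+1}) + \tfrac{L\eta^2 n}{2}$.

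Summing over $t=1,\dots,T$ telescopes the first right-hand term to $f(x_1)-f(x_{T+1}) \le \Delta_0$ (using $f(x_{T+1})\ge f(x^*)$, so no per-step monotone descent is needed), leaving an additive $\tfrac{TL\eta^2 n}{2}$. Dividing by $\eta\gamma_{min}T/D$ gives the stated bound with its two terms $D\Delta_0/(\eta T\gamma_{min})$ and $DL\eta n/(2\gamma_{min})$. The explicit rate then follows by balancing these over $\eta$, i.e.\ minimizing an expression of the form $A/\eta + B\eta$ at $\eta=\sqrt{A/B}$, which trades the two contributions against each other to produce the $O(1/\sqrt{T})$ scaling.

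The main obstacle I anticipate is not the algebra but the legitimacy of the constant $\gamma_{min}$. Because $\gamma_t = \tanh(\alpha)\,N_t/(N_t+\epsilon)$ degrades to $0$ as $N_t\to 0$, the infimum $\gamma_{min}$ can itself vanish and render the bound vacuous precisely near stationarity; I would flag that the guarantee is informative only while the iterates keep the gradient norm bounded away from zero. I would likewise note that the $\sqrt{n}$ factor inherited from the norm-bound lemma is plausibly pessimistic, since it charges every coordinate the maximal $\tanh^2 \le 1$ and ignores sparsity or anisotropy in $\tilde{g}_t$. Making these caveats explicit, rather than any intricate estimate, is the genuinely delicate part of presenting the result honestly.
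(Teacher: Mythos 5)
Your proposal is correct and takes essentially the same approach as the paper: where the paper's proof only cites ``standard steps'' yielding $\sum_{t=1}^T (f(x_t)-f(x^*)) \le \frac{D\Delta_0}{\eta\gamma_{min}} + \frac{TDL\eta n}{2\gamma_{min}}$, you supply exactly those steps---the $L$-smoothness descent inequality, Lemmas \ref{lem:inner_product_rev4} and \ref{lem:norm_bound_rev4}, the convexity/Cauchy--Schwarz/diameter conversion $N_t \ge (f(x_t)-f(x^*))/D$, telescoping without assuming monotone descent, and the $A/\eta + B\eta$ balancing---and your caveats about $\gamma_{min}$ vanishing near stationarity and the pessimistic $\sqrt{n}$ factor mirror the paper's own remarks. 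One incidental point in your favor: your balancing yields $\eta=\sqrt{2\Delta_0/(TLn)}$ and a final constant carrying $D$ linearly, which agrees with the paper's proof text ($\eta^2 = 2\Delta_0/(TLn)$) but not with the theorem's displayed choice $\eta=\sqrt{2D\Delta_0/(TLn)}$ and its $\sqrt{D}$ in the rate; that misplaced $D$ is an internal inconsistency of the paper, not a gap in your argument.
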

\begin{proof}
The proof follows standard steps for first-order methods \cite{Nesterov2004ConvexOpt, Bubeck2015Convex}, leading to $\sum_{t=1}^T (f(x_t) - f(x^*)) \le \frac{D \Delta_0}{\eta \gamma_{min}} + \frac{T D L \eta n}{2 \gamma_{min}}$. Dividing by $T$ gives the first bound. Balancing the terms by setting $\frac{D \Delta_0}{\eta T \gamma_{min}} = \frac{D L \eta n}{2 \gamma_{min}}$ yields $\eta^2 = \frac{2 \Delta_0}{T L n}$. Substituting $\eta = \sqrt{\frac{2 \Delta_0}{T L n}}$ gives the final $O(1/\sqrt{T})$ rate dependent on problem parameters and $\gamma_{min}$.
\end{proof}

\textit{Remark on the Requirement $\gamma_{min} > 0$:} The validity and utility of this rate depend critically on $\gamma_{min}$ being bounded away from zero. For general convex functions, as $x_t \to x^*$, it is expected that $N_t = \|g_t\|_2 \to 0$. Since $\gamma_t \propto N_t / (N_t + \epsilon)$, $\gamma_t$ will also approach zero unless $N_t$ vanishes slower than linearly relative to $\epsilon$ or is bounded below. This means the derived rate may become arbitrarily slow near the optimum without stronger assumptions. Sufficient conditions ensuring $\gamma_{min} > 0$ or providing alternative convergence guarantees include:
\begin{itemize}
    \item[\textbf{(a)}] \textbf{$\mu$-Strong Convexity:} If $f$ is $\mu$-strongly convex, $\|g_t\|_2^2 \ge 2\mu(f(x_t) - f(x^*))$ \cite{Nesterov2004ConvexOpt}. This ensures $\|g_t\|_2$ only vanishes as $f(x_t) \to f(x^*)$, potentially allowing for linear convergence rates or ensuring $\gamma_{min}$ behaves reasonably if $\epsilon$ is appropriately managed. A direct analysis under strong convexity would yield potentially faster rates.
    \item[\textbf{(b)}] \textbf{Polyak-Łojasiewicz (PL) Condition:} If $\|g_t\|_2^2 \ge 2\mu(f(x_t) - f(x^*))$, similar benefits apply as for strong convexity regarding gradient vanishing \cite{Polyak1963PL, Karimi2016PL}.
    \item[\textbf{(c)}] \textbf{Fixed $\epsilon$ and Gradient Behavior:} If $\|g_t\|_2$ is guaranteed to stay above some $G_{min} \gg \epsilon$ away from the optimum, then $\gamma_{min} \approx \tanh(\alpha)$. However, this is unrealistic near the solution.
\end{itemize}
Without these stronger conditions, the $O(1/\sqrt{T})$ rate's constant factor $1/\gamma_{min}$ can grow unboundedly as $T \to \infty$, limiting its practical implication for general convex functions.

\subsection{Convergence in the Non-Convex Setting}

\begin{assumption}[Smooth Non-Convex Objective]
\label{asm:smooth_nonconvex_rev3}
The objective function $f: \mathbb{R}^n \to \mathbb{R}$ is $L$-smooth and bounded below by $f_{inf}$.
\end{assumption}

\begin{theorem}[Convergence to Stationarity]
\label{thm:nonconvex_conv_rev3}
Under Assumption \ref{asm:smooth_nonconvex_rev3}, running AlphaGrad for $T$ steps with step size $\eta > 0$ yields:
\[
\min_{1 \le t \le T} \frac{\|g_t\|_2^2}{\|g_t\|_2 + \epsilon} \le \frac{1}{T} \sum_{t=1}^T \frac{\|g_t\|_2^2}{\|g_t\|_2 + \epsilon} \le \frac{f(x_1) - f_{inf}}{\eta T \tanh(\alpha)} + \frac{L \eta n}{2 \tanh(\alpha)}.
\]
Choosing $\eta = \sqrt{\frac{2 (f(x_1) - f_{inf})}{L n T}}$ gives:
\[
\min_{1 \le t \le T} \frac{\|g_t\|_2^2}{\|g_t\|_2 + \epsilon} \le O\left( \sqrt{\frac{L n (f(x_1)-f_{inf})}{\tanh^2(\alpha) T}} \right).
\]
This implies $\liminf_{t \to \infty} \|g_t\|_2 = 0$.
\end{theorem}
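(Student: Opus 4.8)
The plan is to run the standard descent-lemma argument for $L$-smooth non-convex objectives, feeding in the two structural facts already established: the norm bound $\|g'_t\|_2^2 \le n$ (Lemma \ref{lem:norm_bound_rev4}) and the alignment bound $\langle g_t, g'_t\rangle \ge \gamma_t N_t$ (Lemma \ref{lem:inner_product_rev4}), where $\gamma_t = \tanh(\alpha)\, N_t/(N_t+\epsilon)$ so that $\gamma_t N_t = \tanh(\alpha)\, N_t^2/(N_t+\epsilon)$. These two facts are precisely the "positive progress" and "bounded step" ingredients that a one-step decrease inequality needs.

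First I would invoke $L$-smoothness at the update $x_{t+1}-x_t = -\eta g'_t$:
\[
f(x_{t+1}) \le f(x_t) - \eta \langle g_t, g'_t\rangle + \tfrac{L\eta^2}{2}\|g'_t\|_2^2 .
\]
Substituting the alignment lower bound and the norm upper bound yields the per-step descent
\[
f(x_{t+1}) \le f(x_t) - \eta\tanh(\alpha)\frac{N_t^2}{N_t+\epsilon} + \frac{L\eta^2 n}{2},
\]
which I rearrange to isolate the gradient surrogate $h(N_t) := N_t^2/(N_t+\epsilon)$. Summing over $t=1,\dots,T$ telescopes the function-value terms, and using $f(x_{T+1}) \ge f_{inf}$ gives
\[
\eta\tanh(\alpha)\sum_{t=1}^T \frac{N_t^2}{N_t+\epsilon} \le f(x_1) - f_{inf} + \frac{TL\eta^2 n}{2}.
\]
Dividing through by $\eta T\tanh(\alpha)$ and using $\min_t \le \tfrac1T\sum_t$ produces the displayed two-sided chain. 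The choice $\eta = \sqrt{2(f(x_1)-f_{inf})/(LnT)}$ balances the two right-hand terms, each collapsing to $\tfrac{1}{\tanh(\alpha)}\sqrt{Ln(f(x_1)-f_{inf})/(2T)}$, and their sum is the claimed $O(\sqrt{Ln(f(x_1)-f_{inf})/(\tanh^2(\alpha)T)})$. This substitution step is purely mechanical.

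The main obstacle is the final claim $\liminf_{t\to\infty}\|g_t\|_2 = 0$, and it has two distinct pieces. The easier piece is converting a vanishing surrogate into a vanishing gradient norm: I would note that $h(N)=N^2/(N+\epsilon)$ is continuous and strictly increasing on $[0,\infty)$ with $h(0)=0$ (since $h'(N)=N(N+2\epsilon)/(N+\epsilon)^2 \ge 0$), so $h(N)\to 0$ forces $N\to 0$. The genuine subtlety is that with a \emph{fixed} constant $\eta$ the right-hand side does not vanish — the term $L\eta n/(2\tanh(\alpha))$ persists as $T\to\infty$ — so the bound alone yields only $\liminf_t h(N_t) \le L\eta n/(2\tanh(\alpha))$, not $0$. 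To genuinely recover $\liminf=0$ I would either (i) adopt a diminishing schedule $\eta_t$ satisfying $\sum_t \eta_t=\infty$ and $\sum_t \eta_t^2<\infty$, so that summing the per-step decrease gives $\sum_t \eta_t\, h(N_t)<\infty$ while $\sum_t \eta_t=\infty$ precludes $h(N_t)$ staying bounded away from zero, or (ii) read the $T$-dependent optimal $\eta$ as driving the minimum surrogate over the horizon to $0$ as $T\to\infty$. I would make this dependence explicit, since it is the one place where the constant-step statement and the asymptotic conclusion do not immediately align.
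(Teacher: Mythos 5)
Your proof follows the paper's argument step for step: the descent lemma for $L$-smooth $f$, substitution of Lemma \ref{lem:norm_bound_rev4} for $\|g'_t\|_2^2 \le n$ and Lemma \ref{lem:inner_product_rev4} for $\langle g_t, g'_t\rangle \ge \tanh(\alpha)\,N_t^2/(N_t+\epsilon)$, the telescoping sum with $f(x_{T+1}) \ge f_{inf}$, the bound $\min_t \le \frac{1}{T}\sum_t$, and the balancing choice of $\eta$ are all identical to the published proof. Where you differ is the final claim $\liminf_{t\to\infty}\|g_t\|_2 = 0$: the paper simply asserts that ``as $T \to \infty$, the RHS vanishes,'' which holds only for the horizon-dependent $\eta \propto 1/\sqrt{T}$ and therefore describes a family of trajectories indexed by $T$ rather than a single run; with a fixed constant $\eta$ the residual term $L\eta n/(2\tanh(\alpha))$ persists, yielding only $\liminf_t h(N_t) \le L\eta n/(2\tanh(\alpha))$, exactly as you observe. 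Your two repairs --- a diminishing schedule with $\sum_t \eta_t = \infty$ and $\sum_t \eta_t^2 < \infty$, or an explicitly horizon-dependent reading of the bound --- are both standard and make the asymptotic conclusion rigorous where the paper's version is loose, and your remark that $h(N) = N^2/(N+\epsilon)$ is strictly increasing (via $h'(N) = N(N+2\epsilon)/(N+\epsilon)^2 \ge 0$) slightly strengthens the paper's continuity argument for converting $h(N_t) \to 0$ into $N_t \to 0$.
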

\begin{proof}
Following standard non-convex analysis steps \cite{Nesterov2013Nonconvex, Ghadimi2013StochasticNonconvex} and using Lemmas \ref{lem:norm_bound_rev4} and \ref{lem:inner_product_rev4}:
From $L$-smoothness, $f(x_{t+1}) \le f(x_t) + \langle g_t, x_{t+1} - x_t \rangle + \frac{L}{2} \|x_{t+1} - x_t\|_2^2$.
Substituting $x_{t+1} - x_t = -\eta g'_t$ and using Lemma \ref{lem:norm_bound_rev4} for $\|g'_t\|_2^2 \le n$:
\[
f(x_{t+1}) \le f(x_t) - \eta \langle g_t, g'_t \rangle + \frac{L \eta^2 n}{2}.
\]
Using Lemma \ref{lem:inner_product_rev4}, $\langle g_t, g'_t \rangle \ge \gamma_t N_t = \tanh(\alpha) \frac{N_t^2}{N_t + \epsilon} = \tanh(\alpha) h(N_t)$:
\[
f(x_{t+1}) \le f(x_t) - \eta \tanh(\alpha) h(N_t) + \frac{L \eta^2 n}{2}.
\]
Rearranging gives $\eta \tanh(\alpha) h(N_t) \le f(x_t) - f(x_{t+1}) + \frac{L\eta^2 n}{2}$.
Summing from $t=1$ to $T$ yields a telescoping sum:
\[
\eta \tanh(\alpha) \sum_{t=1}^T h(N_t) \le f(x_1) - f(x_{T+1}) + \frac{T L \eta^2 n}{2} \le f(x_1) - f_{inf} + \frac{T L \eta^2 n}{2}.
\]
Dividing by $T \eta \tanh(\alpha)$ gives the average bound:
\[
\frac{1}{T} \sum_{t=1}^T h(N_t) \le \frac{f(x_1) - f_{inf}}{T \eta \tanh(\alpha)} + \frac{L \eta n}{2 \tanh(\alpha)}.
\]
Since the minimum is less than or equal to the average, $\min_{1 \le t \le T} h(N_t) \le \frac{1}{T} \sum_{t=1}^T h(N_t)$. Substituting the chosen $\eta$ balances the terms and yields the final rate for the average (and thus an upper bound for the minimum). As $T \to \infty$, the RHS vanishes, implying $\liminf_{t \to \infty} h(N_t) = 0$. Since $h(z) = z^2/(z+\epsilon)$ is continuous and positive for $z>0$ with $h(0)=0$, this necessitates $\liminf_{t \to \infty} N_t = \liminf_{t \to \infty} \|g_t\|_2 = 0$.
\end{proof}

\textit{Remark on Bound Strength:} This theorem guarantees convergence to stationarity. However, the convergence rate is established for the quantity $h(N_t) = N_t^2 / (N_t + \epsilon)$, not directly for the squared gradient norm $N_t^2$. While $h(N_t) \approx N_t$ for large $N_t$ and $h(N_t) \approx N_t^2 / \epsilon$ for small $N_t$, this differs from standard non-convex rates often expressed as $O(1/\sqrt{T})$ or $O(1/T)$ for $\min \mathbb{E}[\|g_t\|_2^2]$ or $\frac{1}{T}\sum \mathbb{E}[\|g_t\|_2^2]$ in the stochastic setting (e.g., for SGD \cite{Ghadimi2013StochasticNonconvex} or Adam \cite{Reddi2018AdamConvergence}). This makes direct rate comparisons challenging and suggests the guarantee, while ensuring stationarity, might be formally weaker in quantifying the speed of gradient norm reduction compared to state-of-the-art analyses for other optimizers. Achieving stronger bounds might require different proof techniques or assumptions.

\subsection{Discussion of Convergence Results}

This analysis formally establishes convergence guarantees for AlphaGrad under standard deterministic optimization assumptions. Specifically, Theorem \ref{thm:avg_conv_rev4} guarantees convergence for smooth convex functions, while Theorem \ref{thm:nonconvex_conv_rev3} ensures convergence to stationary points in the smooth non-convex setting.

The derived convex convergence rate, $O(\sqrt{n} / (\gamma_{min}\sqrt{T}))$, explicitly depends on the problem dimension $n$ and the minimum alignment factor $\gamma_{min}$. However, the practical utility of this rate is tempered by the requirement that $\gamma_{min}$ remains bounded away from zero, which typically necessitates stronger assumptions than basic convexity, such as strong convexity or the Polyak-Łojasiewicz condition \cite{Karimi2016PL}, to prevent $\gamma_t$ from vanishing near the optimum. Furthermore, the $O(\sqrt{n})$ dimension dependence, stemming from the potentially loose $\|g'_t\|_2 \le \sqrt{n}$ bound, presents a theoretical limitation for high-dimensional problems compared to dimension-independent rates common for SGD \cite{Bottou2018OptimizationMethods}. While potentially pessimistic—ignoring gradient structure like sparsity—this factor warrants further investigation, especially given AlphaGrad's strong empirical performance in high-dimensional tasks like HalfCheetah, suggesting the bound may not fully capture practical behavior. Future analysis leveraging concentration inequalities \cite{Vershynin2018HDP} or gradient sparsity \cite{Stich2019SparseSGDConvergence} could potentially yield tighter, potentially dimension-independent rates, similar to analyses for normalized SGD variants \cite{Levy2018OnlineNormalization}.

In the more relevant non-convex setting, the $O(1/\sqrt{T})$ rate confirms convergence to stationarity. Yet, the guarantee applies to the modified quantity $h(\|g_t\|_2) = \|g_t\|_2^2 / (\|g_t\|_2 + \epsilon)$, rather than the standard squared gradient norm $\|g_t\|_2^2$. This complicates direct quantitative comparisons with typical rates for SGD or Adam \cite{Reddi2018AdamConvergence, Bottou2018OptimizationMethods} and might indicate a formally weaker guarantee on the speed of gradient reduction. Regarding hyperparameters, $\alpha$ influences the rate constant via $\tanh(\alpha)$, with diminishing returns for $\alpha \ge 2$, suggesting practical values likely reside in a moderate range (e.g., $[1, 5]$). The stability parameter $\epsilon$ (e.g., $10^{-8}$) is crucial but interacts subtly with $\gamma_{min}$ near vanishing gradients.

Crucially, this deterministic analysis operates under idealized assumptions, neglecting the stochasticity from mini-batching, complex reinforcement learning dynamics (exploration, non-stationarity), and architecture-specific interactions prevalent in practice. This inherent theory-practice gap limits the extent to which these theoretical results can explain empirical phenomena, such as observed stability patterns or performance variations across different algorithms \cite{Agarwal2021DeepRL}. The guarantees provide a valuable foundational understanding but must be interpreted cautiously and complemented by empirical evidence and future theoretical work incorporating stochasticity and problem-specific structure.

In conclusion, AlphaGrad possesses formal convergence guarantees in standard deterministic optimization settings. The analysis clarifies the roles of its parameters but also highlights limitations regarding assumptions required for meaningful convex rates near the optimum, the dimension dependence, and the strength of the non-convex rate guarantee compared to other methods. The theoretical foundation established here requires further development to fully align with and explain the rich empirical behavior observed in complex, high-dimensional, stochastic applications.

\section{Related Work and Theoretical Motivation}
\label{sec:related_work} 

AlphaGrad's design draws inspiration from several lines of optimizer research while introducing a unique combination of layer-wise normalization and smooth, tunable clipping. We first outline related methods and then delve into the theoretical underpinnings of AlphaGrad, including its inherent mechanisms for handling gradient scale issues.

\subsection{Comparison with Existing Optimizers}

We contrast AlphaGrad with key existing optimization algorithms:

\paragraph{signSGD \cite{Bernstein2018signSGD}:} Uses only the gradient's sign.
\begin{equation}
    \theta_{t+1} = \theta_t - \eta \cdot \operatorname{sign}(g_t)
\end{equation}
\textit{Difference:} AlphaGrad approaches signSGD as $\alpha \to \infty$, but provides a smooth, tunable transition via $\tanh$ and operates on normalized gradients ($\tilde{g}_t^L$). SignSGD uses a hard, non-differentiable sign function directly on $g_t$.

\paragraph{Gradient Clipping \cite{Pascanu2013Clipping}:} Limits gradient magnitude via hard thresholds, often used to combat exploding gradients. For L2 norm clipping:
\begin{equation}
    g_t^{\text{clipped}} = \begin{cases}
        g_t & \text{if } \|g_t\|_2 \le C \\
        C \cdot \frac{g_t}{\|g_t\|_2} & \text{if } \|g_t\|_2 > C
    \end{cases}
\end{equation}
where $C$ is the clipping threshold.
\textit{Difference:} AlphaGrad uses the smooth, differentiable $\tanh$ function applied element-wise to the normalized gradient, avoiding hard thresholds and the associated non-differentiability. The effective "clip" is implicitly controlled by $\alpha$.

\paragraph{LARS \cite{You2017LARS} / LAMB \cite{You2019LAMB}:} Adapt learning rates layer-wise using a trust ratio, primarily motivated by stabilizing large-batch training.
\begin{equation}
    \eta^L_t = \eta \cdot \frac{\|\theta_t^L\|_2}{\|g_t^L\|_2 + \beta \|\theta_t^L\|_2} \quad (\text{Simplified LARS/LAMB scaling})
\end{equation}
\begin{equation}
    \theta_{t+1}^L = \theta_t^L - \eta^L_t \cdot (\dots \text{update direction} \dots) 
\end{equation}
\textit{Difference:} LARS/LAMB explicitly scale the learning rate based on parameter and gradient norms. AlphaGrad normalizes the gradient magnitude away first ($\|\tilde{g}_t^L\|_2 \approx 1$) and then shapes the unit direction vector using $\tanh(\alpha \cdot \tilde{g}_t^L)$.

\paragraph{Adam \cite{Kingma2014Adam} / RMSProp \cite{HintonRMSProp}:} Adapt learning rates per-parameter using historical moments (exponential moving averages of the gradient and its square).
\begin{align}
    m_t &= \beta_1 m_{t-1} + (1-\beta_1) g_t \\
    v_t &= \beta_2 v_{t-1} + (1-\beta_2) g_t^2 \\
    \hat{m}_t &= m_t / (1-\beta_1^t), \quad \hat{v}_t = v_t / (1-\beta_2^t) \\
    \theta_{t+1} &= \theta_t - \eta \cdot \frac{\hat{m}_t}{\sqrt{\hat{v}_t} + \epsilon} \label{eq:adam_update} 
\end{align}
\textit{Difference:} Adam/RMSProp are stateful (requiring $m_t, v_t$) and perform per-parameter adaptation based on history. AlphaGrad is stateless (excluding optional momentum) and performs layer-wise adaptation based only on the current gradient via normalization and the $\tanh$ transform, leading to lower computational overhead per step.

\subsection{Theoretical Motivation and Properties of AlphaGrad}

AlphaGrad's mechanism provides several desirable properties stemming directly from its formulation:

\begin{enumerate}
    \item \textbf{Gradient Magnitude Control (Handling Vanishing/Exploding Gradients):} The core operations inherently address gradient scale issues, a well-known challenge in deep network training \cite{Hochreiter2001LSTM, Pascanu2013Clipping}:
    \begin{itemize}
        \item \textit{Normalization:} $\tilde{g}_t^L = g_t^L / (\|g_t^L\|_2 + \epsilon)$. This step is crucial.
            \begin{itemize}
                \item \textbf{Exploding Gradients:} If $\|g_t^L\|_2$ becomes very large, normalization forces $\|\tilde{g}_t^L\|_2 \approx 1$. This acts as an adaptive, automatic rescaling, preventing excessively large updates without requiring a fixed threshold like standard gradient clipping \cite{Pascanu2013Clipping}.
                \item \textbf{Vanishing Gradients:} If $\|g_t^L\|_2$ becomes very small, normalization still ensures $\|\tilde{g}_t^L\|_2 \approx 1$. This amplifies the directional signal relative to the magnitude, ensuring that even small gradients contribute meaningful directional information to the subsequent step, potentially preventing the optimizer from stalling solely due to diminishing gradient scale.
            \end{itemize}
        \item \textit{Smooth Bounding:} $g'^{L}_t = \tanh(\alpha^L \cdot \tilde{g}_t^L)$. The $\tanh$ function ensures that every component of the final transformed gradient $g'^{L}_t$ is strictly bounded within $(-1, 1)$. This provides an absolute bound on the magnitude of individual update components, contributing further to stability.
    \end{itemize}

    \item \textbf{Tunable Interpolation via $\alpha$:} The steepness parameter $\alpha^L$ controls the behavior:
    \begin{equation}
        g'^{L}_t = \tanh(\alpha^L \cdot \tilde{g}_t^L) \approx
        \begin{cases}
            \alpha^L \cdot \tilde{g}_t^L & \text{if } \alpha^L \to 0 \\
            \operatorname{sign}(\tilde{g}_t^L) = \operatorname{sign}(g_t^L) & \text{if } \alpha^L \to \infty
        \end{cases}
    \end{equation}
    This allows AlphaGrad to smoothly transition between using the scaled normalized gradient direction (preserving relative component magnitudes within the normalized vector) and using purely sign-based information (discarding magnitude information entirely, akin to signSGD \cite{Bernstein2018signSGD}). This tuning capability allows adapting the optimizer's aggressiveness and information preservation.

    \item \textbf{Enhanced Sensitivity in Flat Regions:} As discussed previously, when raw gradients $g_t^L$ are small near minima or plateaus, the normalization step preserves the directional signal $\tilde{g}_t^L$. The subsequent $\tanh(\alpha^L \cdot \tilde{g}_t^L)$ transformation, especially with moderate-to-large $\alpha$, can remain highly sensitive to this directional information due to the steep slope of $\tanh$ around zero input. This mechanism might help AlphaGrad maintain progress and escape such regions more effectively than methods whose step size diminishes directly with $\|g_t^L\|$.

    \item \textbf{Computational Efficiency:} AlphaGrad requires only one norm computation and one element-wise $\tanh$ operation per layer beyond the standard gradient calculation. Unlike Adam/RMSProp \cite{Kingma2014Adam, HintonRMSProp}, it does not require storing or updating auxiliary variables like first and second moment estimates (unless momentum is used, which adds one state variable). This makes it computationally lightweight in terms of both memory footprint and floating-point operations per step compared to adaptive methods like Adam.
\end{enumerate}

\subsection{Optimizer Comparison Summary}

Table \ref{tab:optimizer_comparison} provides a comparative summary of AlphaGrad and related optimizers across key characteristics.

\begin{table}[H]
\centering
\caption{Comparison of Optimizer Characteristics}
\label{tab:optimizer_comparison}
\resizebox{\textwidth}{!}{%
\begin{tabular}{@{}lccccc@{}}
\toprule
Feature & AlphaGrad & SGD + Momentum & Adam / RMSProp & signSGD & LARS / LAMB \\
\midrule
\textbf{Adaptation Scope} & Layer-wise & None & Per-parameter & None & Layer-wise \\
\textbf{Adaptation Basis} & Norm + $\tanh(\alpha)$ & None & Historical Moments & Grad Sign & Trust Ratio \\
\textbf{Requires State?} & No (excl. Momentum) & No (excl. Momentum) & Yes & No & No \\
\textbf{Gradient Scale Handling} & Controlled (Norm + $\tanh$) & Requires Clipping & Mitigated (Variance) & Controlled (Sign) & Less Affected (Rate Scaling) \\ 
\textbf{Key Hyperparams} & $\eta$, $\alpha$ (, $\gamma$) & $\eta$ (, $\gamma$) & $\eta$, $\beta_1, \beta_2, \epsilon$ & $\eta$ & $\eta$, Trust Ratio Params \\ 
\textbf{Compute Cost / Step} & Low & Very Low & High & Very Low & Medium \\
\bottomrule
\end{tabular}%
}
\end{table}

\section{Initial Experimental Setup}
\label{sec:experiments}

To evaluate the generalization and stability properties of \alphagrad{} in baseline reinforcement learning tasks, we conduct a series of benchmark experiments using the CleanRL framework \cite{Huang2022CleanRL}. Our goal is to assess AlphaGrad's performance across varied control scenarios and compare it against the widely used Adam optimizer \cite{Kingma2014Adam}.

\subsection{Environments and Policies}

To evaluate the performance and generalization capability of the AlphaGrad optimizer, we conduct experiments across three reinforcement learning environments, each paired with a distinct policy algorithm from the CleanRL suite \cite{Huang2022CleanRL}:

\begin{itemize}
    \item \textbf{HalfCheetah-v5 (PPO)}: A continuous control task from the MuJoCo suite \cite{Todorov2012MuJoCo}, accessed via Gymnasium \cite{Towers2023Gymnasium}, where the agent must learn to coordinate its body segments and run forward efficiently. We use Proximal Policy Optimization (PPO) \cite{Schulman2017PPO}, a popular on-policy algorithm, to examine AlphaGrad's behavior in high-dimensional continuous action spaces.

    \item \textbf{CartPole-v1 (DQN)}: A classic control task with a discrete action space, available in Gymnasium \cite{Towers2023Gymnasium}, where an agent learns to balance a pole on a moving cart. Deep Q-Network (DQN) \cite{Mnih2015DQN}, a value-based off-policy algorithm, is used to test AlphaGrad in low-dimensional, discrete environments.

    \item \textbf{Hopper-v4 (TD3)}: A continuous control environment from MuJoCo \cite{Todorov2012MuJoCo}, via Gymnasium \cite{Towers2023Gymnasium}, where a single-legged agent learns to hop forward while maintaining balance. We employ Twin Delayed Deep Deterministic Policy Gradient (TD3) \cite{Fujimoto2018TD3}, which is specifically designed for stable learning in continuous domains, and allows us to test AlphaGrad's performance under deterministic policy updates.
\end{itemize}

These environments span a broad spectrum of control complexity and algorithmic paradigms such as discrete versus continuous actions, on-policy versus off-policy learning, and low versus high-dimensional state spaces. This diversity enables a comprehensive evaluation of AlphaGrad's stability, adaptability, and convergence behavior across different RL regimes. For continuous action spaces (e.g., HalfCheetah), the output layer represents the mean of a Gaussian policy, whereas for discrete environments (e.g., CartPole), the output layer parameterizes a distribution over actions \cite{Sutton2018RLIntro}. 

\subsection{\texorpdfstring{Directional Saturation and Non-Linear Gradient Behavior of $\alpha$}{Directional Saturation and Non-Linear Gradient Behavior of alpha}}
During test time, we noticed a unique trait that we had not accounted for. While $\alpha$ in AlphaGrad was thought of as a function of the parameter count per layer (e.g., $\alpha = k \cdot \sqrt{d_\ell}$), its true behavior is fundamentally non-linear. Specifically, the interaction between $\alpha$ and the normalized gradient vector $\hat{g} = g / \|g\|_2$ does not result in uniform scaling across components, but rather acts as a directional gating mechanism driven by $\tanh(\alpha \cdot \hat{g}_i)$. Let's examine this behavior and illustrates how $\alpha$ modulates the optimizer's response to gradient heterogeneity.

\paragraph{Alpha as a Directional Filter:}
The update rule in AlphaGrad,
\[
g'_i = \tanh(\alpha \cdot \hat{g}_i),
\]
reveals that $\alpha$ controls the sharpness of the transition from linear behavior to saturation. Importantly, when gradients are heterogeneous---i.e., when some components of $\hat{g}$ are significantly larger than others---the effect of $\alpha$ is no longer uniform across the vector. Larger components saturate, while smaller ones remain in the linear or semi-linear regime of $\tanh$.

\paragraph{Example 1: Balanced Gradients:} Suppose a normalized gradient vector in a layer of dimension $d = 4$ has the form:
\[
\hat{g} = \left[0.5, 0.5, -0.5, -0.5\right].
\]
Applying $\alpha = 2$, we compute:
\[
g'_i = \tanh(2 \cdot \hat{g}_i) = \left[\tanh(1), \tanh(1), \tanh(-1), \tanh(-1)\right] \approx \left[0.7616, 0.7616, -0.7616, -0.7616\right].
\]
This results in a smooth and proportional update across all directions, preserving gradient shape while gently compressing its magnitude.

\paragraph{Example 2: Spiked Gradient Distribution:} Now consider a case with a dominant component:
\[
\hat{g} = \left[0.9, 0.05, 0.03, 0.02\right],
\]
and apply $\alpha = 150$. Then:
\[
g'_i = \tanh(150 \cdot \hat{g}_i) = \left[\tanh(135), \tanh(7.5), \tanh(4.5), \tanh(3)\right] \approx \left[1.0, 0.9999994, 0.9997532, 0.99505\right].
\]
Despite the $\hat{g}_i$ components summing to unit norm, the scaled gradient is effectively saturated in all directions. This produces a near-binary directional update, where the dominant dimension completely overwhelms the others—mirroring SignSGD-like behavior but with soft edges. However, if the smaller components were slightly smaller still (e.g., $\hat{g}_i < 0.01$), they would remain in the linear region of $\tanh$, resulting in a mixed-mode update: one saturated direction and others contributing small, non-saturating nudges.

This behavior demonstrates that $\alpha$ does not uniformly control the optimizer’s aggressiveness across all dimensions. Instead, it introduces gradient-wise selectivity that enables AlphaGrad to resist overcommitment in the presence of spiked or noisy gradient directions, while still preserving signal in weaker components. This allows the optimizer to escape from sharp but narrow minima by maintaining contribution from subtle directions that would otherwise be suppressed. In this way, $\alpha$ serves as a smooth interpolant between full gradient descent and SignSGD-like behavior, not acting merely as a scale factor but functioning as a non-linear filter that sculpts the geometry of descent in a directionally aware and task-adaptive manner.

\subsection{Framework Implementation Differences}
\label{sec:framework_impl}

While AlphaGrad is designed to be conditionally stateless and modular, the behavior of its gradient normalization step can differ depending on the implementation framework due to how layers and parameter tensors are organized.

In PyTorch \cite{Paszke2019PyTorch}, parameters are typically grouped by layers (e.g., \texttt{Linear.weight}, \texttt{Linear.bias}), and it is common practice to structure optimizers using \texttt{parameter\_groups} that reflect this layer-wise organization. As a result, the AlphaGrad implementation in PyTorch computes the L2 norm and applies the \texttt{tanh}-based clipping at the \textit{layer level}, treating each parameter group as a unit. This has the advantage of maintaining intra-layer gradient structure, but may struggle when layers differ significantly in scale, potentially leading to uneven update magnitudes across the network. 

In contrast, a TensorFlow-style \cite{Abadi2016TensorFlow} implementation typically operates over \texttt{model.trainable\_variables}, which are individual tensors rather than grouped layers. Consequently, the same normalization operation—computing $\tilde{g}_t = \nabla f / \| \nabla f \|$ and applying $\tanh(\alpha \cdot \tilde{g}_t)$—is applied independently per tensor. This approach offers finer granularity and may reduce inter-tensor gradient imbalance.

\textbf{Early testing on a private regression dataset (not shown in this paper)} suggests that \textit{tensor-wise normalization} (as done by default in TensorFlow) led to improved convergence stability and smoother loss trajectories compared to layer-wise normalization. However, further controlled experimentation across diverse tasks and architectures is required to determine the optimal granularity of normalization.

This finding motivates future exploration into adaptive normalization schemes that can dynamically balance between tensor-level and group-level gradient control based on model scale or task complexity.

\section{Results and Discussion}
\label{sec:results}

This section critically evaluates the performance of AlphaGrad against the widely-used Adam optimizer \cite{Kingma2014Adam} across three distinct reinforcement learning benchmarks. Our analysis explores how AlphaGrad's unique combination of layer-wise normalization and non-linear transformation interacts with different algorithmic paradigms (DQN \cite{Mnih2015DQN}, TD3 \cite{Fujimoto2018TD3}, PPO \cite{Schulman2017PPO}) and task complexities, revealing a highly context-dependent performance profile.

\subsection{CartPole-v1 (DQN)}

We first examined AlphaGrad on the CartPole-v1 benchmark using Deep Q-Networks (DQN). AlphaGrad was configured with $\alpha$ values derived during test time, ($\alpha \in \{93, 186, 279\}$, interestingly, corresponding to $k=1, 2, 3$ if we derived alpha from $d_{model}$ and not per layer). The results, presented in Figure \ref{fig:cartpole_results}, immediately highlight a challenge: while AlphaGrad agents successfully learned the task—with $\alpha=93$ showing rapid initial progress—their performance was marked by significant instability compared to the smooth convergence trajectory of Adam. Higher values of $\alpha$ exacerbated this issue, leading to slower learning, lower peak performance, and more pronounced oscillations, particularly evident for $\alpha=279$. These performance fluctuations directly correlated with large spikes in the TD loss, suggesting periods where the Q-value updates became highly inaccurate or divergent, a behavior less pronounced under Adam's optimization.

\begin{figure}[H]
    \centering
    \begin{adjustbox}{angle=270}
        \begin{minipage}{\textheight}
            \centering
            \includegraphics[width=\textwidth]{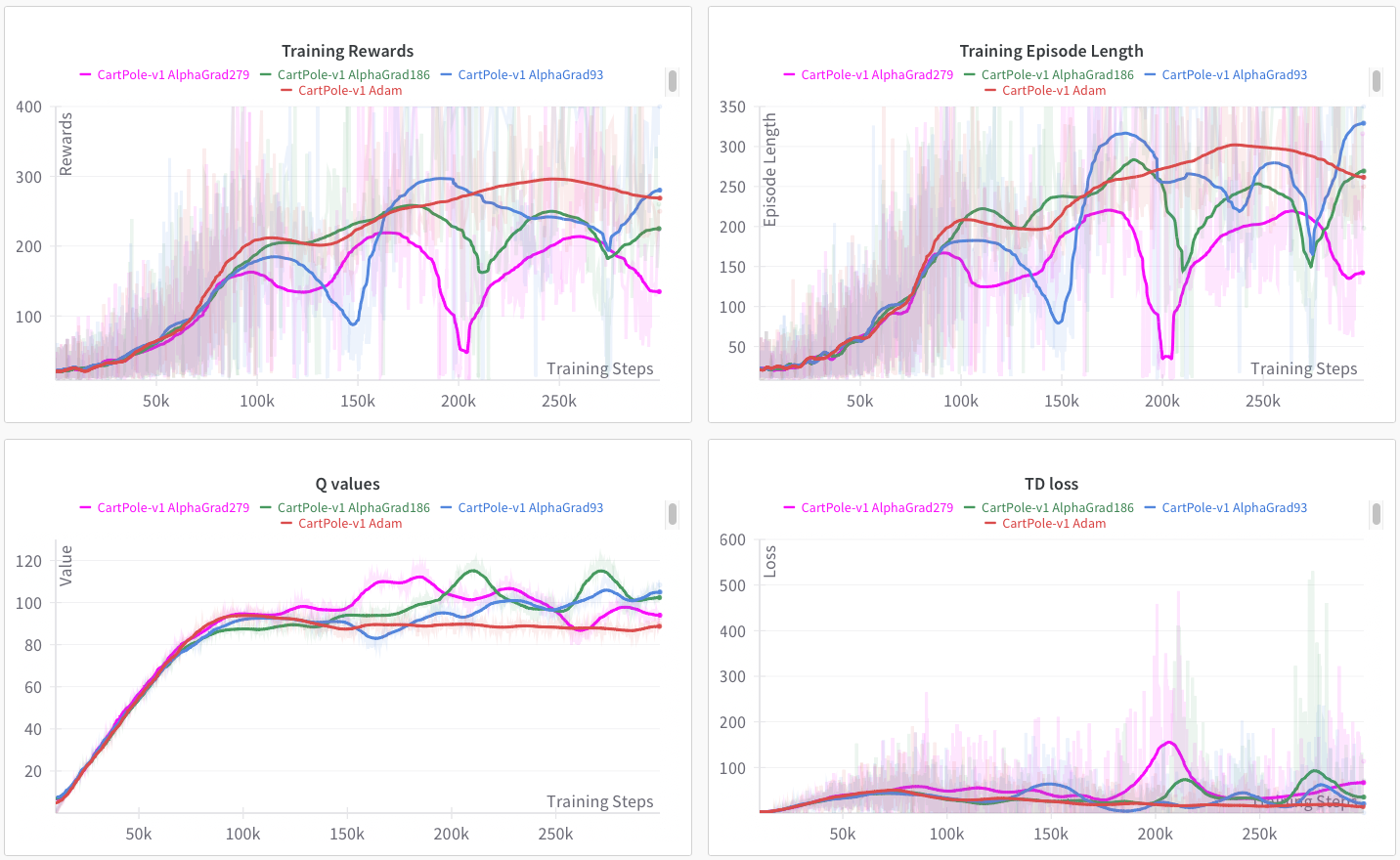} 
            \captionof{figure}{Performance on CartPole-v1 (DQN): Episodic Return, Length, TD Loss, and Q-Values. AlphaGrad variants ($\alpha=93, 186, 279$) vs. Adam.}
            \label{fig:cartpole_results} 
        \end{minipage}
    \end{adjustbox}
\end{figure}

These findings underscore AlphaGrad's sensitivity to $\alpha$ within the DQN framework and suggest a fundamental tension between its core mechanics and off-policy value learning. The relative success of $\alpha=93$ indicates that operating closer to the linear regime of the $\tanh$ function might be less disruptive than the aggressive, near-sign updates induced by larger $\alpha$. The strong saturation associated with high $\alpha$ appears detrimental, likely discarding crucial gradient magnitude information necessary for stable Q-value estimation in this balancing task. This instability likely stems from AlphaGrad's stateless reactivity clashing with DQN's dynamics. DQN relies on potentially noisy target values derived from bootstrapping and experience replay. Adam mitigates this through temporal smoothing via momentum and adaptive scaling via variance estimates \cite{Kingma2014Adam}. AlphaGrad, lacking this history dependence, reacts instantaneously to the current, potentially noisy gradient. While normalization standardizes scale, the subsequent $\tanh$ transform reshapes the update direction based solely on this immediate signal. Coupled with DQN's inherent potential for instabilities (e.g., the "deadly triad" involving off-policy learning, function approximation, and bootstrapping \cite{Sutton2018RLIntro}), this reactivity seems to amplify oscillations rather than suppress them. These results imply that the basic AlphaGrad formulation may require modifications—such as adaptive $\alpha$ scheduling or enhanced momentum strategies—for robust application in off-policy value-based methods, and firmly establish that optimal $\alpha$ necessitates careful empirical tuning beyond simple scaling heuristics.

\subsection{Hopper-v4 (TD3)}

Motivated by the stability challenges observed with DQN, we evaluated AlphaGrad on the more complex Hopper-v4 continuous control task using the Twin Delayed Deep Deterministic Policy Gradient (TD3) algorithm \cite{Fujimoto2018TD3}. Recognizing the inadequacy of the theoretical scaling law (Section \ref{sec:alpha_choice}), we performed an empirical search for effective $\alpha$ values, testing $\alpha \in \{100, 150, 200, 250, 457\}$ against Adam, utilizing a reduced learning rate suitable for the sensitivity of actor-critic methods. The results (Figure \ref{fig:hopper_results}) presented a starkly different outcome.

\begin{figure}[H]
    \centering
    \begin{adjustbox}{angle=270}
        \begin{minipage}{\textheight}
            \centering
            \includegraphics[width=\textwidth]{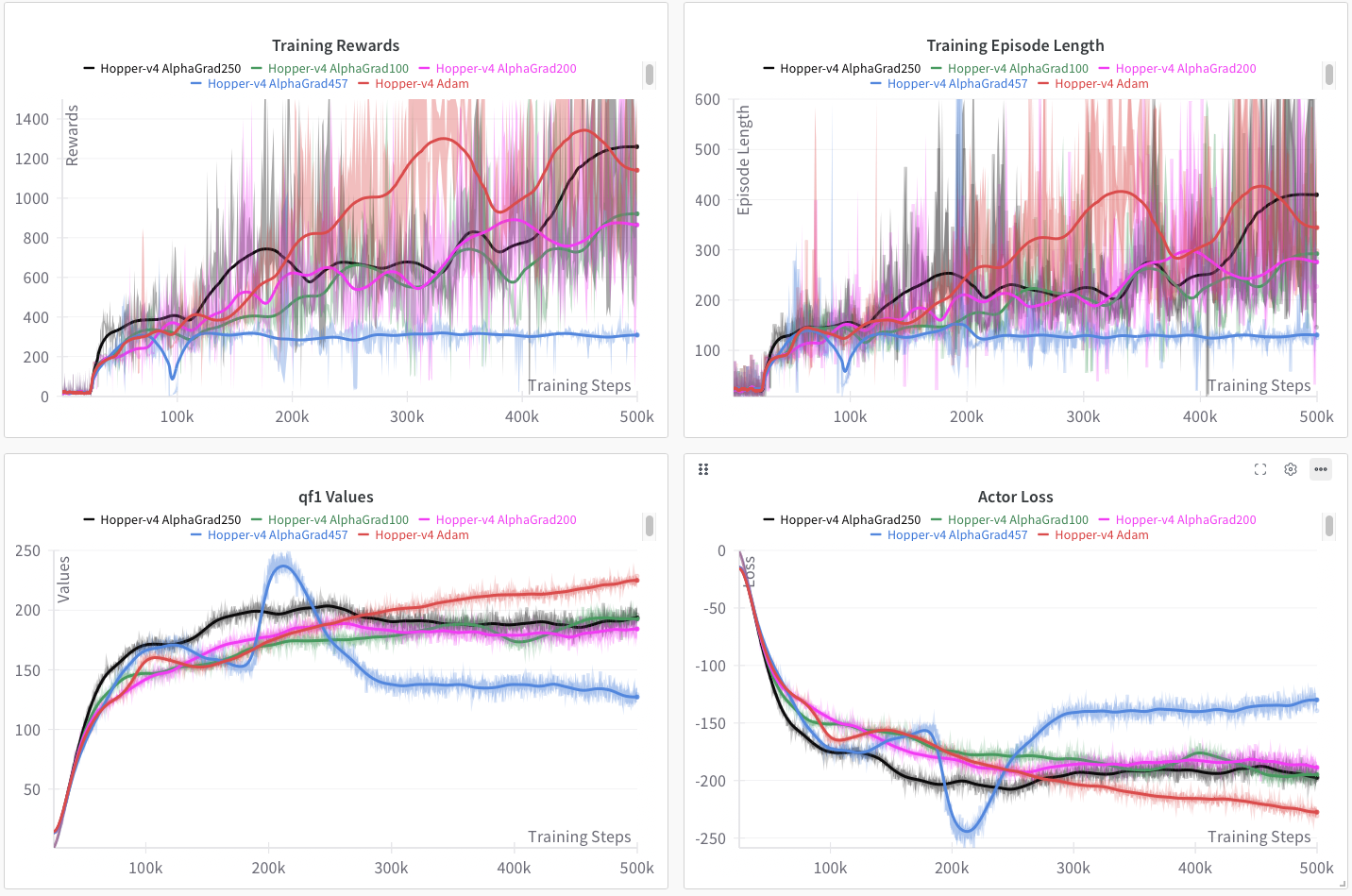} 
            \captionof{figure}{Performance on Hopper-v4 (TD3): Training Rewards, Episode Length, Q-Values, and Actor Loss. AlphaGrad variants (tuned $\alpha$) vs. Adam.}
            \label{fig:hopper_results} 
        \end{minipage}
    \end{adjustbox}
\end{figure}

Confirming the scaling law's limitations for this context, $\alpha=457$ failed to learn effectively. However, empirically identified lower values, particularly $\alpha \in [100, 250]$, yielded strong performance competitive with Adam's peak rewards. The most significant finding was the \textit{superior stability} exhibited by these well-tuned AlphaGrad variants. While Adam achieved slightly higher peak rewards, its learning trajectory showed considerable volatility and significant performance dips. In contrast, AlphaGrad ($\alpha \in [100, 250]$) displayed remarkably smooth, almost monotonic improvement, corroborated by more stable Q-value estimates and actor loss trajectories. This suggests that AlphaGrad's core mechanism—layer-wise normalization ensuring scale consistency, followed by the smooth, bounded `tanh` transformation—when combined with an appropriate learning rate, effectively constrained potentially destabilizing parameter updates that might underlie Adam's volatility in this complex actor-critic setting. The optimal $\alpha$ range ([100-250]) appears to represent a regime prioritizing stability, where the `tanh` function provides sufficient non-linearity and bounding without aggressively discarding magnitude information. This enhanced predictability and robustness could be highly valuable in practical applications like robotics, where consistent improvement and avoidance of performance collapse are often paramount \cite{Kober2013SurveyRLRobotics}. The Hopper results demonstrate that, with careful empirical tuning of $\alpha$ and learning rate, AlphaGrad can match Adam's performance while offering substantially improved training stability, positioning it as a potentially valuable alternative in stability-sensitive domains, possibly benefiting from the inherent stabilization mechanisms within actor-critic frameworks like TD3 \cite{Fujimoto2018TD3}.

\subsection{HalfCheetah-v5 (PPO)}

Our final evaluation focused on the HalfCheetah-v5 environment using the on-policy Proximal Policy Optimization (PPO) algorithm \cite{Schulman2017PPO}. Here, we utilized a larger range of alpha values due to limited compute resources ($\alpha \in \{98, 196, 294\}$) against Adam. The results, shown in Figure \ref{fig:halfcheetah_results}, were striking and demonstrated a clear advantage for AlphaGrad.

\begin{figure}[H]
    \centering
    \begin{adjustbox}{angle=270}
        \begin{minipage}{\textheight}
            \centering
            \includegraphics[width=\textwidth]{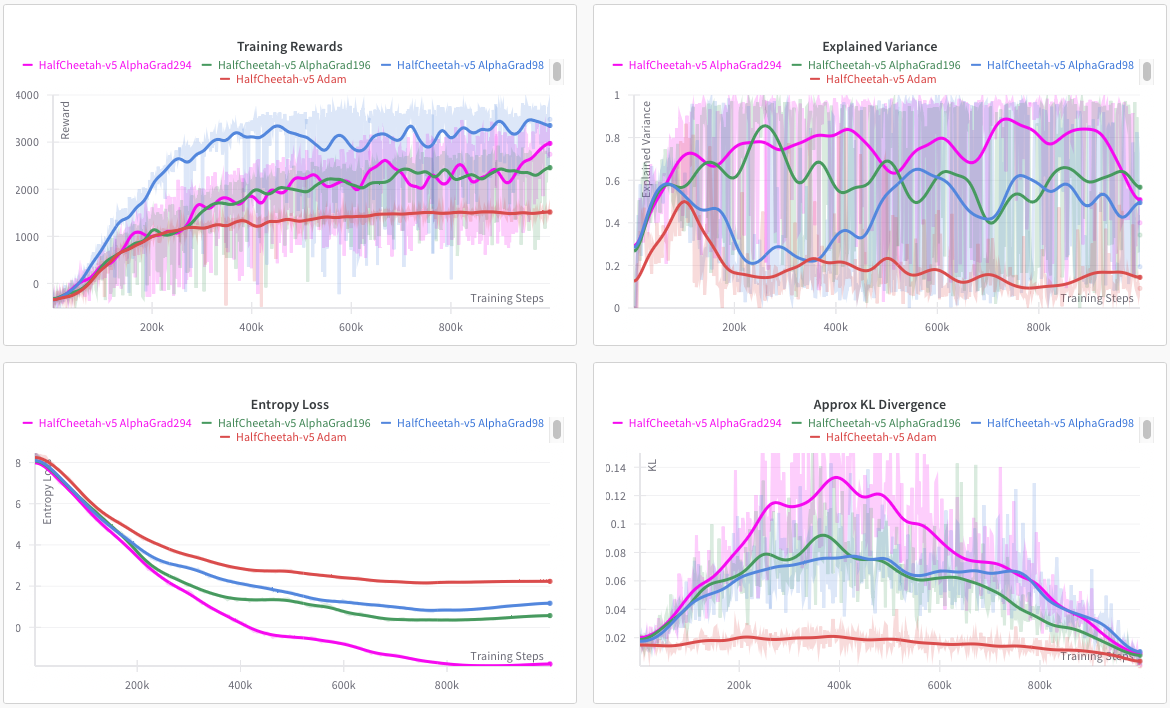} 
            \captionof{figure}{Performance on HalfCheetah-v5 (PPO): Training Rewards, Explained Variance, Entropy Loss, and Approx KL Divergence. AlphaGrad variants ($\alpha=98, 196, 294$) vs. Adam.}
            \label{fig:halfcheetah_results} 
        \end{minipage}
    \end{adjustbox}
\end{figure}

AlphaGrad decisively outperformed Adam in this PPO setting. All AlphaGrad variants achieved substantially higher asymptotic rewards, with $\alpha=98$ peaking above 3000—more than double Adam's plateau around 1500. This performance gain was accompanied by significantly higher explained variance, indicating that AlphaGrad facilitated the learning of a much more accurate value function, crucial for PPO's advantage estimation \cite{Schulman2017PPO}. Furthermore, analysis of policy dynamics revealed faster entropy decay (especially for higher $\alpha$) and markedly larger policy updates per iteration (higher approximate KL divergence) with AlphaGrad compared to Adam. This suggests AlphaGrad enabled PPO to take more aggressive, yet effective, steps within its trust region. The superior performance appears linked to a strong synergy between AlphaGrad's characteristics and PPO's on-policy nature. AlphaGrad's stateless design allows immediate reaction to gradients computed from the latest experience batch, avoiding potential lag from Adam's historical estimates \cite{Kingma2014Adam}. This reactivity, combined with effective gradient rescaling (normalization) and shaping ($\tanh$), seems particularly beneficial for on-policy algorithms that thrive on fresh gradient information. The optimal performance observed at $\alpha=98$ (lowest derived value) hints that even milder non-linearity might suffice, potentially preserving useful magnitude variations while enabling large, effective policy steps.

\subsection{Overall Discussion}

The comparative experiments reveal that AlphaGrad's effectiveness is highly context dependent, shaped by the interplay between its core mechanisms, the RL algorithm's dynamics, and task structure. Its performance relative to Adam varies dramatically: exhibiting instability in off-policy DQN (CartPole), offering superior stability with competitive performance in actor-critic TD3 (Hopper, requiring tuning), and achieving dominant performance in on-policy PPO (HalfCheetah). This sensitivity to the learning algorithm aligns with broader observations about the difficulty of achieving consistent performance across RL benchmarks \cite{Agarwal2021DeepRL}.

A critical outcome is the demonstrated unreliability of the theoretical $\alpha$ scaling heuristic versus the paramount importance of empirical tuning. Optimal performance consistently required searching for $\alpha$, often favoring values deviating significantly from the $\sqrt{d_p}$ guideline derived in Section \ref{sec:alpha_choice}. This highlights that $\alpha$ governs a complex trade-off specific to each learning context.

AlphaGrad's statelessness presents as a double-edged sword. Its immediate gradient reactivity, apparently advantageous in on-policy PPO by enabling rapid adaptation, seemed detrimental in off-policy DQN where it may amplify noise absent Adam's temporal smoothing \cite{Kingma2014Adam}. Yet, the Hopper results suggest that in certain contexts, potentially actor-critic frameworks or scenarios where stability is inherently challenging, AlphaGrad's normalization and bounding can confer superior stability compared to adaptive methods.

In conclusion, AlphaGrad emerges as a promising optimizer with distinct characteristics and a clear potential niche. Its demonstrated strength in on-policy RL warrants significant further investigation. However, its sensitivity to $\alpha$ necessitates robust tuning strategies, and its varied performance profile underscores that optimizer choice remains algorithm- and task-dependent \cite{Ruder2016Overview, Bottou2018OptimizationMethods}. AlphaGrad's value lies in scenarios where its specific benefits—enhanced stability or rapid on-policy adaptation—offer tangible advantages over established adaptive methods.

\section{Further Empirical Validation}
\label{sec:validation_framework}

While preliminary results and theoretical analysis suggest potential benefits for the \alphagrad{} optimizer, a rigorous and comprehensive empirical evaluation across diverse domains is necessary to fully characterize its performance, robustness, and general applicability. Due to the limited budget for this study, testing on standardized benchmarks and LLMs will occur in an addendum. Regardless, this section outlines derivatives of AlphaGrad and motivation that may help discover unknown patterns and characteristics in various domains and modern techniques.

\subsection{Enhancing the Gradient Transformation}
While \alphagrad{} currently relies on the $\tanh$ function to smoothly clip normalized gradients, this choice opens interesting avenues for further exploration. The \textit{shape} and \textit{dynamics} of the non-linearity play a crucial role in how gradient information propagates, especially in regions where gradients are sparse or highly variable. \\

We hypothesize that alternative transformations—particularly those introducing looped or scaled behavior—may further enhance AlphaGrad’s stability and expressivity. For instance, scaling the output of the $\tanh$ non-linearity (\textit{e.g.}, $\beta \cdot \tanh(\alpha x)$) could adjust the effective range of updates, allowing the optimizer to be more responsive without sacrificing the smooth saturation behavior that curtails instability.\\

More intriguingly, we propose testing looped transformations such as:
\[
g' = \sin\left(k \cdot \tanh(\alpha x)\right)
\]
which introduces a softly oscillatory yet bounded transformation. This could help retain directional signal while adding controlled sensitivity in flat regions of the loss surface—particularly relevant in over-parameterized models or late-stage fine-tuning.

These variants are theoretically promising, but their practical value remains to be validated. We consider the systematic benchmarking of such nonlinearities as a promising direction for future research, especially in domains where stability and convergence smoothness are paramount, such as reinforcement learning \cite{Agarwal2021DeepRL}, continual learning \cite{Parisi2019Continual}, or training very deep networks \cite{He2016ResNet}.

\subsection{\texorpdfstring{Layer-wise Adaptive $\alpha^L$ Scheduling}{Layer-wise Adaptive alpha L Scheduling}}
Our current experiments assume a global $\alpha$, shared across all layers. However, layers in modern architectures (e.g., attention vs. feedforward in transformers) exhibit significantly different gradient distributions and dynamics \cite{Xiao2018Dynamical}. A promising direction is to allow $\alpha^L$ to vary per-layer, either:
\begin{itemize}
    \item \textit{Statically:} Initialize $\alpha^L$ as a function of the parameter tensor size $d_p$, using $\alpha^L \propto \sqrt{d_p}$ (as derived in Section \ref{sec:alpha_choice}).
    \item \textit{Dynamically:} Adjust $\alpha^L$ during training based on local statistics such as the saturation ratio $S^L(\alpha)$, potentially adapting to changing gradient characteristics.
\end{itemize}
This would require tuning or learning a small set of $\alpha^L$ values, akin to how learning rates are decoupled across parameter groups in adaptive methods like LARS \cite{You2017LARS} or second-order inspired methods like Shampoo \cite{Gupta2018Shampoo}.

\subsection{Saturation-Aware Scheduling}
We propose a dynamic tuning strategy for $\alpha$ based on the \textit{saturation ratio}, $S(\alpha)$, defined as:
\[
S(\alpha) = \frac{1}{d_p} \sum_{i=1}^{d_p} \mathbb{1}\left[\left|\alpha \cdot \tilde{g}_i\right| > \tau\right],
\]
where $\tau$ (e.g., 2.5 or 3.0) marks the onset of saturation in the $\tanh$ function. A scheduler can then modulate $\alpha$ to maintain $S(\alpha)$ within a target range (e.g., $[0.05, 0.25]$) throughout training.

This saturation-guided strategy could potentially balance early exploratory updates (low $\alpha$) with later-stage convergence stability (high $\alpha$), similar to how cosine or exponential learning rate schedulers influence learning phases by controlling the step size magnitude \cite{Loshchilov2016SGDR}.

\subsection{Applications in Low-Precision and Memory-Constrained Training}
AlphaGrad's statelessness offers practical benefits for deployment in memory-limited or edge scenarios, where the overhead of adaptive optimizers like Adam becomes problematic \cite{Kingma2014Adam}. Because AlphaGrad does not store second moments or large per-parameter state, its memory footprint is significantly smaller. This may be especially beneficial when training quantized networks or TinyML models deployed on resource-constrained hardware \cite{Banbury2021MICROAI}. Furthermore, the bounded nature of updates ($|g'_{t,i}| < 1$) may help avoid overflow or instability in fixed-point arithmetic, which is common in low-precision systems where numerical range is limited \cite{Gupta2015LowPrecision, Jacob2018Quantization}.

\subsection{Integration with Pretraining Regimes and LLM Scaling}
The derived scaling law $\alpha \propto \sqrt{d_p}$ implies that as model size increases, the steepness parameter should grow. This aligns with trends in large model training where gradient scales vary widely across transformer blocks and layers \cite{Kaplan2020Scaling}. However, this behavior introduces concerns of over-saturation if $\alpha$ becomes excessively large.

We propose exploring hybrid update rules where:
\[
g''_t = \lambda \cdot \tanh(\alpha \cdot \tilde{g}_t) + (1 - \lambda) \cdot \tilde{g}_t,
\]
interpolating between the clipped and unclipped normalized update, possibly with adaptive $\lambda$. Such a mechanism could be useful in early pretraining stages or phase transitions between low- and high-precision training, potentially offering more stability than standard optimizers in large-scale settings \cite{Shazeer2018Adafactor, Chen2023Lion}. 

\subsection{Diagnostic Analyses and Broader Benchmarking} 
\label{sec:diagnostics_benchmarking}

To gain deeper insights into AlphaGrad's operational dynamics and its interactions with the optimization process, several diagnostic analyses are warranted. Visualizing the loss landscape geometry navigated by AlphaGrad compared to other optimizers, using techniques like those proposed by Li et al. \cite{Li2018LossLandscape}, could illuminate differences in trajectory curvature, sharpness of minima reached, and update directions. Complementing this, systematic ablation studies isolating the effects of the L2 normalization stage from the subsequent $\tanh$ transformation would precisely quantify the contribution of each component to performance and stability. Furthermore, tracking the distribution of the pre-activation values, $\alpha \cdot \tilde{g}_i$, throughout training could empirically validate whether the gradient components occupy the intended sensitive region of the $\tanh$ function or exhibit excessive saturation or linearity, potentially informing adaptive $\alpha$ strategies. Beyond these diagnostics, rigorous validation necessitates expanding the empirical evaluation to widely adopted benchmarks beyond reinforcement learning. Assessing AlphaGrad's performance on standard supervised learning tasks, such as image classification using MNIST \cite{LeCun1998MNIST}, CIFAR-10/100 \cite{Krizhevsky2009CIFAR}, or ImageNet \cite{Deng2009ImageNet}, would be crucial for characterizing its general applicability and comparing its convergence speed, final accuracy, and generalization properties against established optimizers in these canonical settings.

\section{Final Remarks}
\label{sec:conclusion}

In this work, we introduced AlphaGrad, a memory-efficient optimizer combining layer-wise normalization and a smooth non-linear transformation, and provided initial evidence for its potential utility. Our investigation, constrained by a limited compute budget, focused primarily on reinforcement learning benchmarks. This focus was intentional, driven by the observation that while standard supervised learning benchmarks like MNIST or CIFAR-10 remain valuable, they are increasingly exhibiting saturation effects, where even novel architectural or optimization advances yield diminishing returns against already high baseline performance \cite{Recht2019ImagenetAccuracy, Rajpurkar2018SQuADSaturation}. Furthermore, many real-world applications, particularly those involving sequential decision-making under uncertainty, align more closely with the complexities inherent in RL \cite{DulacArnold2021RLChallenges}. RL environments often feature non-stationarity, high stochasticity, and lack a fixed supervisory signal, making agent behavior highly sensitive to algorithmic components, including the choice of optimizer \cite{Henderson2018DeepRL}. Consequently, RL serves as a challenging stress test, magnifying differences in stability, convergence dynamics, and robustness that might be less apparent in more constrained supervised settings \cite{Agarwal2021DeepRL}. Evaluating optimizers in this regime provides critical insights into their behavior under demanding conditions, which is increasingly relevant as AI systems are deployed in complex, safety-sensitive domains \cite{Amodei2016ConcreteProblemsAISafety}.

Our initial study revealed a compelling, albeit context-dependent, performance profile for AlphaGrad, notably highlighting its capacity for superior stability in complex actor-critic settings (TD3) and significantly enhanced performance in on-policy learning (PPO) compared to Adam, contingent upon careful empirical tuning of the $\alpha$ hyperparameter. While not formally included due to the use of non-standardized datasets, preliminary experiments on text classification tasks using both MLP and Transformer architectures \cite{Devlin2019BERT} also indicated promising convergence behavior. This paper serves as a foundational proposal and analysis of AlphaGrad; extensive future research is required to fully characterize its behavior across diverse domains (including large-scale vision \cite{He2016ResNet} and language modeling \cite{Kaplan2020Scaling}), develop robust hyperparameter tuning strategies (potentially adaptive $\alpha$ methods as discussed in Section \ref{sec:validation_framework}), refine the theoretical understanding (particularly regarding stochastic convergence and dimension dependence, extending Section \ref{sec:convergence}), and definitively establish its place within the broader landscape of deep learning optimization \cite{Bottou2018OptimizationMethods}.

\newpage
\addcontentsline{toc}{section}{References}

\end{document}